\documentclass{article}




\usepackage[final]{neurips_2025}


\usepackage[utf8]{inputenc}            
\usepackage[T1]{fontenc}               
\usepackage[hidelinks]{hyperref}       
\usepackage{url}            
\usepackage{booktabs}       
\usepackage{amsfonts}       
\usepackage{nicefrac}       
\usepackage{microtype}      
\usepackage{xcolor}         

\usepackage{comment}
\usepackage{graphicx}
\usepackage{multirow}
\usepackage{adjustbox}
\usepackage{array}
\usepackage{amsmath}
\usepackage{amsthm}
\usepackage{pifont}
\usepackage[table]{xcolor}

\newtheorem{theorem}{Theorem}[section]

\newtheorem{lemma}[theorem]{Lemma}

\title{AnaCP: Toward Upper-Bound Continual Learning via Analytic Contrastive Projection}

\author{
  Saleh Momeni$^1$, Changnan Xiao$^2$, Bing Liu$^1$\\
  $^1$ Department of Computer Science, University of Illinois Chicago \\
  $^2$ ChangnXX.github.io \\
  \texttt{\{smomen3, liub\}@uic.edu}~~~~~ \texttt{changnanxiao@gmail.com} \\
}

\begin{document}

\maketitle

\begin{abstract}
This paper studies the problem of class-incremental learning (CIL), a core setting within continual learning where a model learns a sequence of tasks, each containing a distinct set of classes. Traditional CIL methods, which do not leverage pre-trained models (PTMs), suffer from catastrophic forgetting (CF) due to the need to incrementally learn both feature representations and the classifier. The integration of PTMs into CIL has recently led to efficient approaches that treat the PTM as a fixed feature extractor combined with analytic classifiers, achieving state-of-the-art performance. However, they still face a major limitation: the inability to continually adapt feature representations to best suit the CIL tasks, leading to suboptimal performance. To address this, we propose AnaCP (Analytic Contrastive Projection), a novel method that preserves the efficiency of analytic classifiers while enabling incremental feature adaptation without gradient-based training, thereby eliminating the CF caused by gradient updates. Our experiments show that AnaCP not only outperforms existing baselines but also achieves the accuracy level of joint training, which is regarded as the upper bound of CIL.
\end{abstract}

\section{Introduction}
\label{sec.intro}

Continual learning (CL) learns a sequence of tasks incrementally, where each task introduces a set of new classes \cite{chen2018lifelong}. The key challenge in CL is catastrophic forgetting (CF) \cite{McCloskey1989}, a phenomenon where learning new tasks degrades the performance on previously learned ones. Among CL settings, class-incremental learning (CIL) \cite{van2019three} is particularly challenging as it needs to build a unified classifier to recognize all encountered classes, without task-id information at test time. This paper focuses on the CIL setting. Early CIL approaches typically trained models from scratch using regularization, experience replay, or architectural modifications to mitigate CF \cite{ke2022continualsurvey,wang2024comprehensive}. However, they fall significantly short of \textbf{joint training} accuracy, which trains a model with the data from all tasks simultaneously, representing the \textbf{upper bound} for CIL performance \cite{lin2024class}. 

Recent CIL methods increasingly exploit pre-trained models (PTMs) to improve accuracy by leveraging their strong feature representations \cite{ke2021achieving,kim2023learnability,yang2024recent}. Existing PTM-based approaches for CIL can be broadly divided into three main categories: (i) fine-tuning the PTM, either fully or through lightweight adapter \citep{zhang2023slca}, (ii) optimizing learnable prompts while keeping the PTM parameters frozen \citep{wang2022learning}, and (iii) using the PTM as a fixed feature extractor paired with an analytic classifier, such as the nearest class mean (NCM) method or more advanced variants \cite{zhuang2022acil,mcdonnell2023ranpac,momeni2025continual}. 
We use the term \textbf{analytic} to describe methods with a \textbf{closed-form solution} that requires no gradient-based training. The first two groups require task-specific training, which is slow and prone to CF. The third group is highly efficient and immune to CF because it avoids gradient-based updates. 

Analytic approaches often achieve state-of-the-art performance in CIL, outperforming methods that fine-tune the PTM or learn prompts \cite{goswami2023fecam, momeni2025continual}. However, a \textbf{major limitation} of the analytic methods is their inability to adapt the PTM features to suit the downstream tasks, leading to suboptimal performance. Prior works have proposed to fine-tune the PTM on the first task, called first session adaptation (FSA) \cite{panos2023first,mcdonnell2023ranpac}. However, the PTM must remain frozen afterward to maintain the integrity of analytic learning; otherwise, their closed-form solutions will not work.


This paper introduces \textbf{AnaCP} (Analytic Contrastive Projection), a novel method that enables analytic approaches to also perform feature adaptation. Our analytic method is related to extreme learning machines (ELMs) \cite{liang2006fast,huang2011extreme}, as outlined in Section~\ref{sec.background}. AnaCP adapts features using a contrastive projection layer, inspired by contrastive learning \cite{khosla2020supervised,chen2020simple}, which draws samples from the same class closer while pushing different classes apart to enhance separation. However, unlike standard contrastive learning, which requires gradient-based training for each task and suffers from CF, AnaCP achieves a similar effect analytically.
With the adapted features, an NCM classifier can be easily constructed (see Section~\ref{clssifier_layer}). However, we found that building another ELM classifier by sampling feature representations from distributions of previous tasks yields better performance, while remaining fully analytic. To summarize, this work makes the following contributions:
\begin{enumerate}
\item We propose a novel analytic approach that enables continual adaptation of feature representations across tasks, addressing the long-standing limitation of fixed representations in prior analytic methods.
\item Our method avoids CF entirely, as it requires no gradient-based training and instead relies on closed-form updates.
\item This approach is highly efficient, requiring no trainable parameters and incurring negligible computational overhead.
\item Through extensive experiments, we show that when paired with a strong PTM, our method achieves accuracy comparable to the joint training upper bound. This is particularly notable, as the inability to match joint training remains a key barrier to the practical adoption of CL.
\end{enumerate}

\section{Related work}

Many CIL approaches have been proposed to deal with CF. These methods can be categorized into three main groups: (i) \textit{Regularization} methods, which mitigate CF by penalizing updates to parameters that are important for old tasks \citep{kirkpatrick2017overcoming,Zenke2017continual,liu2019continual,li2022overcoming}, or by aligning the current model with those of earlier ones using knowledge distillation \citep{hinton2015distilling,buzzega2020dark}. 
(ii) \textit{Experience replay} methods, which retain a subset of past task samples in a buffer and use them in the new task training \citep{aljundi2019online,chaudhry2020continual,shim2021online,liu2021lifelong,qin2022elle}.
A variation of this is \textit{pseudo-replay}, where synthetic samples or feature representations are generated to simulate past data \citep{liu2020generative,van2020brain,zhu2022self,gong2022continual}. 
Our method also replays feature representations in its final step, but they are sampled from each class's distribution represented by its mean and a shared covariance. 
(iii) \textit{architecture-based} methods, which expand the network for each task \citep{wang2022beef,yan2021dynamically,qin-etal-2023-lifelong}, or isolate parameters through learning task-specific sub-networks via masking or orthogonal projections \citep{serra2018overcoming,gururangan2022demix,geng2021continual,wortsman2020supermasks,liu2023continual}, and then use a task-id prediction method at test time \citep{lin2024class,kim2022theoretical,wang2024rehearsal,rajasegaran2020itaml}.

Early approaches to CIL learned both feature representations and classification parameters for each new task without using PTMs, suffering from serious CF. Recent methods address this limitation by leveraging PTMs, which substantially improve CIL performance \citep{ke2021achieving,zhuang2022acil,wang2022dualprompt,mcdonnell2023ranpac,lin2024class,momeni2025continual}, primarily following three strategies: (i) fine-tuning the PTM, either by directly updating its parameters (e.g., SLCA \cite{zhang2023slca}) or by adding adapters \cite{lin2024class,liang2024inflora,sun2025mos}.
(ii) learning prompts, where the PTM remains frozen and trainable prompts are introduced to condition its representations (e.g., L2P \cite{wang2022learning}, DualPrompt \cite{wang2022dualprompt}, CODA-Prompt \cite{smith2023coda}, as well as other variants \cite{wang2023hierarchical,jung2023generating,roy2024convolutional}).
(iii) treating the PTM as a frozen feature extractor with an analytic classifier \cite{mcdonnell2023ranpac,zhuang2024gacl,momeni2025continual}.

Our method aligns with the third strategy, which relies on closed-form solutions. The simplest method is NCM \citep{janson2022simple,zhou2024revisiting}, which computes a mean vector for each class and assigns each test sample to the class with the nearest mean. Another method is SLDA \citep{hayes2020lifelong}, which takes advantage of linear discriminant analysis (LDA) \citep{pang2005incremental}. KLDA \citep{momeni2025continual} applies an RBF kernel to expand the feature space before using LDA for classification. The regularized least squares used in ACIL \cite{zhuang2022acil} and GACL \cite{zhuang2024gacl} is also within this paradigm. Analytic CIL methods can be enhanced by adopting the FSA strategy, where the PTM is fine-tuned on the first task and then kept frozen for subsequent tasks. APER \citep{zhou2024revisiting} adopts this strategy with an NCM classifier. RanPac \citep{mcdonnell2023ranpac} has a similar approach but applies a random projection \cite{huang2011extreme} to the feature space, improving class separation. LoRanPAC \cite{peng2024loranpac} enables RanPAC to use a much higher dimensional random projection. FeCAM \citep{goswami2023fecam} uses a Mahalanobis distance classifier with normalized covariance matrices.

We follow the analytic CIL paradigm by using PTMs as frozen feature extractors. However, unlike previous methods, we introduce a contrastive projection layer to adapt the features for each task, which enables AnaCP to improve the accuracy while preserving the computational efficiency.

\section{Background}
\label{sec.background}

\textbf{Class-Incremental Learning:}
CIL seeks to sequentially learn a series of tasks, each introducing a new set of classes, while prohibiting access to data from previously encountered tasks. Specifically, each task $t$ is defined by a training dataset \( \mathcal{D}_t = \{(x_t^{(i)}, y_t^{(i)})\}_{i=1}^{n_t} \), where $x_t^{(i)} \in \mathcal{X}_t$ represents an input sample, and $y_t^{(i)} \in \mathcal{Y}_t$ is its class label. The class sets $\mathcal{Y}_t$ are mutually exclusive, and $\mathcal{Y} = \bigcup_{t=1}^T \mathcal{Y}_t$ defines the complete set of classes encountered so far. The core objective of CIL is to learn a single classifier $f : \mathcal{X} \rightarrow \mathcal{Y}$ capable of predicting the class label of any test instance without the knowledge of the task it belongs to.

\textbf{Ridge Regression:}
Ridge regression is a closed-form method for training a linear classifier on fixed features using one-hot encoded targets~\cite{saleh2019theory}. Given a feature matrix $\mathbf{X} \in \mathbb{R}^{N \times d}$ from the PTM and a one-hot label matrix $\mathbf{Y} \in \mathbb{R}^{N \times C}$, the objective is to minimize the squared error with $L_2$ regularization:
\begin{equation}
\min_{\mathbf{W}} \; \|\mathbf{XW} - \mathbf{Y}\|^2 + \lambda \|\mathbf{W}\|^2,
\end{equation}
where $\lambda$ controls the strength of the regularization. This objective penalizes deviations from the target labels while encouraging smaller weights to prevent overfitting. The solution is obtained by setting the gradient for $\mathbf{W}$ to zero, yielding the closed-form optimal weights:
\begin{equation}
\label{eq.regression}
\mathbf{W} = (\mathbf{X}^\top \mathbf{X} + \lambda \mathbf{I})^{-1} \mathbf{X}^\top \mathbf{Y},
\end{equation}
often called a ridge classifier or regularized least squares. Here, $\mathbf{X}^\top \mathbf{X}$ is called the Gram matrix ($\mathbf{G}$) and $\mathbf{X}^\top \mathbf{Y}$ is the cross matrix ($\mathbf{H}$). This closed-form solution is attractive because it forgoes iterative training and directly computes a global optimum.

\textbf{Analytic CIL:}
When tasks arrive sequentially, we can compute the ridge regression solution without revisiting past data \citep{liang2006fast,zhuang2022acil}. Let $(\mathbf{X}_t, \mathbf{Y}_t)$ denote the data from task $t$. Rather than storing all data, we can incrementally update the Gram and cross matrices as follows:
\begin{equation}
\label{ACIL}
\mathbf{G}_t = \mathbf{G}_{t-1} + \mathbf{X}_t^\top \mathbf{X}_t,\quad
\mathbf{H}_t = \mathbf{H}_{t-1} + \mathbf{X}_t^\top \mathbf{Y}_t, 
\end{equation}
where $\mathbf{G}_t$ and $\mathbf{H}_t$ are the gram and cross matrices after task $t$, respectively. Here, we slightly abuse the notation for $\mathbf{H}_t$ by allowing the number of classes to increase over time. To maintain dimensional consistency, both $\mathbf{H}_{t-1}$ and the one-hot labels in $\mathbf{Y}_t$ must be zero-padded. This allows us to compute the updated solution:
\begin{equation}
\label{weight-computation}
\mathbf{W}_t = (\mathbf{G}_t + \lambda \mathbf{I})^{-1} \mathbf{H}_t
\end{equation}

\textbf{Random Projection and Extreme Learning Machines:}
To improve the feature representations, the original $d$-dimensional features can be projected into a higher-dimensional space of dimension $D$ before learning the analytic classifier in the projected space. \cite{huang2011extreme} proposes to project the features via a random nonlinear layer:
\begin{equation}
\label{eq.random-projection}
\mathbf{Z} = \phi(\mathbf{X} \mathbf{R}),
\end{equation}
where $\mathbf{R} \in \mathbb{R}^{d \times D}$ is a fixed matrix with random values and $\phi(\cdot)$ is a nonlinear activation, for which we use a GELU function. The classifier is learned analytically via ridge regression:
\begin{equation}
\label{eq.classifier}
\mathbf{W} = (\mathbf{Z}^\top \mathbf{Z} + \lambda \mathbf{I})^{-1} \mathbf{Z}^\top \mathbf{Y}
\end{equation}
This solution can be updated incrementally as before (Eqs.~\ref{ACIL} and~\ref{weight-computation}). The random projection acts like a kernel function, expanding the feature space and introducing nonlinear interactions among input features, which enhances class separability.

\begin{figure}[ht]
\vspace{-2mm}
    \centering
    \includegraphics[width=\textwidth]{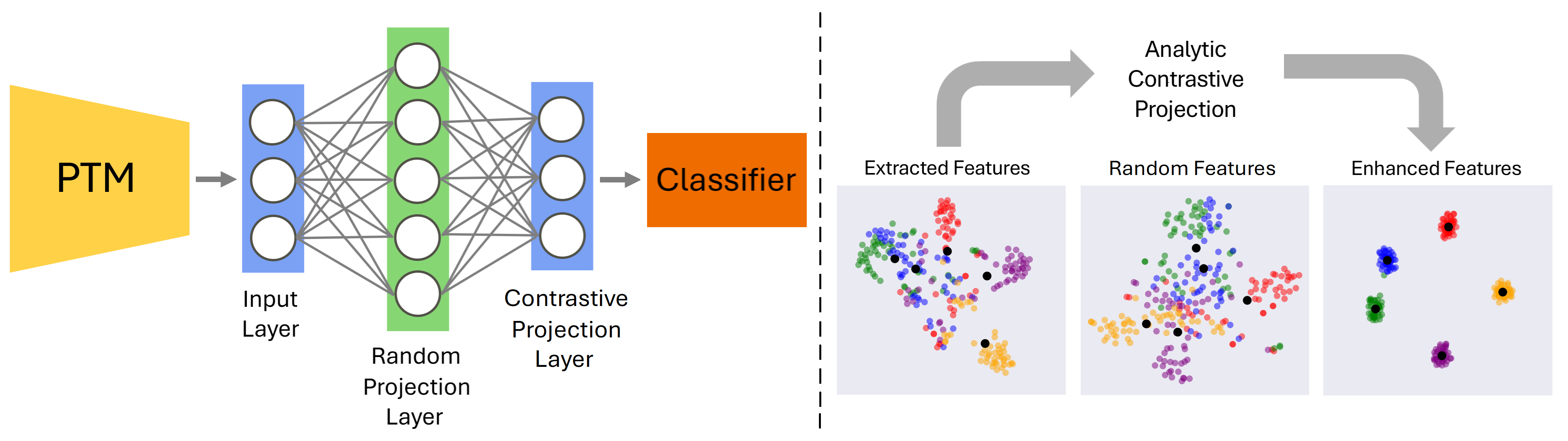}
    \caption{(\textbf{Left}) Architecture of AnaCP: The random projection layer uses fixed, randomly assigned weights, while the contrastive projection layer weights are computed analytically. Compared to the ELM architecture, AnaCP introduces an additional contrastive projection layer that adapts the feature representations. (\textbf{Right}) $t$-SNE visualization of the input features (first five classes of ImageNet-R) and their enhancement after contrastive projection using DINO-v2 as the PTM. Random features are generally more separable than input features due to their higher dimensionality, even though this is not easily observable in the 2D t-SNE map.
    }
    \label{fig:pipeline}
\vspace{-2mm}
\end{figure}

\section{Methodology}
Existing analytic CIL methods operate on fixed feature representations, allowing the classifier to be updated analytically without revisiting prior data. Freezing the PTM is essential because modifying it during training would alter the feature representations, undermining previous analytic updates and leading to CF. However, this rigid strategy also limits the capability to adapt feature representations to suit the downstream CIL tasks.

To overcome this limitation, we propose a novel analytic approach that also adapts feature representations for the continual learning tasks. The pipeline of our approach, illustrated in Figure \ref{fig:pipeline} (left), begins with the PTM generating feature representations, termed the \textbf{input layer}. These features are transformed via a \textbf{random projection (RP) layer} into a higher-dimensional space \cite{huang2011extreme} and then passed through the \textbf{contrastive projection (CP) layer}, which enhances the class separability. The CP layer draws inspiration from contrastive learning, where the samples from the same class (\textbf{positives}) are drawn closer, while samples from different classes (\textbf{negatives}) are pushed apart, typically using a contrastive loss. However, applying this directly to CIL would lead to CF due to iterative gradient updates. We achieve similar effects via an analytic CP layer that can be incrementally updated as new tasks arrive. The impact of this CP layer on feature distribution is depicted in Figure \ref{fig:pipeline} (right). Finally, a \textbf{classifier layer} makes the final predictions based on these adapted features.

\subsection{Positive Alignment via Prototype Regression}
We begin with the ELM formulation, where ridge regression is employed to map the features from the RP layer to the target vectors, as previously defined:
\begin{equation}
\label{eq.classifier2}
\quad \mathbf{W} = (\mathbf{Z}^\top \mathbf{Z} + \lambda \mathbf{I})^{-1} \mathbf{Z}^\top \mathbf{T},
\end{equation}
where $\mathbf{Z} \in \mathbb{R}^{N \times D}$ represents the random feature obtained from the RP layer, and $\mathbf{T}$ contains target vectors for inputs. When $\mathbf{T}$ is one-hot encoded for classification, the cross matrix $\mathbf{Z}^\top \mathbf{T}$ reduces to a matrix where each column is the sum of random features belonging to a given class. This can be factorized as:
\begin{equation}
\mathbf{H} = \mathbf{Z}^\top \mathbf{T} = \mathbf{M} \mathbf{N}
\end{equation}
Here, $\mathbf{M} \in \mathbb{R}^{D \times C}$ contains the \textbf{class prototypes} $\mathbf{m}_c$, defined as the means of the random features of class $c$, and $\mathbf{N} = \text{diag}(n_1, \ldots, n_C)$ holds the number of samples per class. Both $\mathbf{M}$ and $\mathbf{N}$ can be computed incrementally as each task arrives.

This formulation can be extended to arbitrary targets. Instead of using one-hot vectors, we can assign each class a \textbf{target prototype}, representing its ideal position in the projected space. These target prototypes can have any dimensionality; however, to preserve the geometric structure of the representations, we set their dimension to match that of the input layer, $d$. Let $\mathbf{P} \in \mathbb{R}^{C \times d}$ denote the target prototype matrix, where each row $\mathbf{p}_c$ specifies the desired projection for class $c$. For instance, $\mathbf{p}_c$ can be set as the original class mean of the input layer, pulling the feature representations of class $c$ toward their respective mean. In this case, the cross matrix becomes:
\begin{equation}
\label{cross-product}
\mathbf{H} = \mathbf{Z}^\top \mathbf{T} = \sum_{c} \left(\sum_{i \in c} \mathbf{z}_i\right) \mathbf{p}_c^\top = \sum_{c} \mathbf{m}_c \mathbf{n}_c \mathbf{p}_c^\top = \mathbf{M} \mathbf{N} \mathbf{P}
\end{equation}
This decomposition enables efficient incremental updates by maintaining class prototypes $\mathbf{M}$ computed based on the features from the random projection layer, sample counts $\mathbf{N}$, and target prototypes $\mathbf{P}$. The learned projection $\mathbf{W}$ maps samples toward their corresponding target prototypes, effectively pulling intra-class samples together and enhancing positive alignment.

\subsection{Negative Repulsion via Target-Prototype Separation}
\label{sec.NR}
Aligning input samples to their respective class means (using class means as target prototypes) improves intra-class compactness but does not guarantee adequate inter-class separation. In particular, some class means may lie close to each other in the feature space, making them hard to distinguish. To address this, we refine the target prototypes by \textbf{shifting the class means} to enhance the separation between classes.

Let $\mathbf{C} = (\mu_1, \dots, \mu_C) \in \mathbb{R}^{d \times C}$ denote the matrix of class means at the input layer. We normalize these means by whitening them with respect to the feature covariance matrix $\boldsymbol{\Sigma}$ (shared by all classes). This adjustment is essential because a large variance in some dimensions can give a false impression of class separability. For example, in a low-variance dimension even a small separation is meaningful, whereas in high-variance dimensions, a much larger separation is needed to achieve the same effect. Whitening corrects for this imbalance by accounting for the variance of each feature dimension. The covariance is computed incrementally as \cite{goswami2023fecam}:
\begin{equation}
\label{covariance-matrix}
\boldsymbol{\Sigma}_t = \frac{N_{t-1}}{N_t} \boldsymbol{\Sigma}_{t-1} + \frac{1}{N_t} \sum_{c \in \mathcal{C}_t} \sum_{i \in c} (\mathbf{x}_i - \boldsymbol{\mu}_c)(\mathbf{x}_i - \boldsymbol{\mu}_c)^\top, 
\end{equation}
where $N_t$ is the total number of samples seen up to task $t$ and $\mathcal{C}_t$ is the set of classes introduced at task $t$. We can then transform the class means as:
\begin{equation}
\hat{\mathbf{C}} = \boldsymbol{\Sigma}^{-1/2} \mathbf{C},
\end{equation}
yielding whitened means with unit variance in all directions. We then perform singular value decomposition (SVD) on the whitened class means:
\begin{equation}
\hat{\mathbf{C}} = \mathbf{U} \mathbf{S} \mathbf{V}^\top,
\label{eq: svd_decomposition}
\end{equation}
where $\mathbf{U} \in \mathbb{R}^{d \times C}$ is an orthonormal basis of the input space, $\mathbf{V} \in \mathbb{R}^{C \times C}$ defines the class means subspace, and $\mathbf{S} \in \mathbb{R}^{C \times C}$ is a diagonal matrix containing the singular values.

Our objective is to enhance the separation between class means, quantified by the sum of their pairwise cosine similarities. In principle, maximum separation would be achieved if the means were arranged uniformly on the surface of a hypersphere. However, this strict configuration would significantly alter the original data geometry, distorting the feature representations. To maintain the structure while increasing class separation, we introduce an adjustment to the singular values in $\mathbf{S}$, which perturbs the class means in directions that reduce their pairwise cosine similarities:
\begin{equation}
\label{eq.diag}
\tilde{\mathbf{S}} = \mathbf{S} + \alpha Diag\{\delta_1, \dots, \delta_C\},
\end{equation}
where $\alpha$ is a scaling factor, and $\delta_i$ is defined in the following Lemma.

\begin{lemma}
\label{lemma: a_proper_addition}
Denote $w_1, \dots, w_C \in \mathbb{R}^C$ as arbitrary vectors, and $e_1, \dots, e_C \in \mathbb{R}^C$ as a set of orthogonal bases of $\mathbb{R}^C$. Denote $\langle x, y \rangle = x^\top y$ as the inner product and $\theta(x, y) = \frac{x^\top y}{||x|| \cdot ||y||}$ as the cosine similarity. There exist $\alpha > 0$ and 
$$
\delta_i = \left\{\begin{aligned}
1,& ~\text{if}~\sum_{j \neq i} \frac{1}{||w_j||} \left[(2 \cdot 1_{\langle w_i, w_j \rangle \ge 0}) \cdot \langle e_i, w_j \rangle \cdot||w_i||^2 - \langle e_i, w_i \rangle \cdot |\langle w_i, w_j \rangle| \right] < 0, \\
0,& ~\text{if}~ \sum_{j \neq i} \frac{1}{||w_j||} \left[(2 \cdot 1_{\langle w_i, w_j \rangle \ge 0}) \cdot \langle e_i, w_j \rangle \cdot||w_i||^2 - \langle e_i, w_i \rangle \cdot |\langle w_i, w_j \rangle| \right] = 0, \\
-1,& ~\text{else,} \\
\end{aligned}
\right.
$$ s.t. 
\begin{equation}
\sum_i \sum_{j \neq i} |\theta ( w_i + \alpha \delta_i e_i, w_j + \alpha \delta_j e_j )| \le \sum_i \sum_{j \neq i} |\theta ( w_i, w_j )|
\label{eq: less_inner_product}
\end{equation}
\end{lemma}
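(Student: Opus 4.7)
The plan is to prove Lemma~\ref{lemma: a_proper_addition} by a first-order variational argument on the scalar objective $f(w_1, \dots, w_C) = \sum_i \sum_{j \neq i} |\theta(w_i, w_j)|$. I would view $f$ as a function of the $C$ vectors, compute its directional derivative along the perturbation $(\delta_1 e_1, \dots, \delta_C e_C)$, pick each $\delta_i \in \{-1, 0, +1\}$ so this derivative is nonpositive, and then invoke a one-sided Taylor expansion in $\alpha$ to conclude that~\eqref{eq: less_inner_product} holds for all sufficiently small $\alpha > 0$.

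First I would differentiate a single summand. For fixed $j$ and $\langle w_i, w_j \rangle \neq 0$,
\[
\nabla_{w_i}|\theta(w_i, w_j)| \;=\; \frac{\mathrm{sign}(\langle w_i, w_j\rangle)\,w_j}{\|w_i\|\,\|w_j\|} \;-\; \frac{|\langle w_i, w_j\rangle|\,w_i}{\|w_i\|^3\,\|w_j\|}.
\]
Taking its inner product with $e_i$ and clearing the factor $1/\|w_i\|^3$ reproduces exactly the bracketed expression appearing in the lemma (reading the indicator $2\cdot 1_{\langle w_i,w_j\rangle\ge 0}$ as a shorthand for the sign function $\mathrm{sign}(\langle w_i,w_j\rangle)$; the sign-choice argument below is insensitive to this convention because it only fixes the constant multiplying $\alpha$). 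Using the symmetry $|\theta(w_i, w_j)| = |\theta(w_j, w_i)|$, the full directional derivative of $f$ collapses to $\sum_i \delta_i\,\rho_i$, where $\rho_i$ is proportional to the inner sum over $j\neq i$ written in the lemma. Setting $\delta_i$ equal to the negative of the sign of $\rho_i$ is precisely the three-case split in the statement, and yields $\sum_i \delta_i\rho_i = -\sum_i |\rho_i| \le 0$.

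The last step is to promote this infinitesimal bound to the claimed inequality. A Taylor expansion gives $f(\alpha) = f(0) + \alpha \sum_i \delta_i\rho_i + o(\alpha)$, so whenever the linear term is strictly negative any small enough $\alpha > 0$ works; if every $\rho_i = 0$ then every $\delta_i = 0$ and the inequality is trivial at any $\alpha$. Picking $\alpha$ smaller than the smallest nonzero $\alpha$-value at which any $\langle w_i + \alpha\delta_i e_i, w_j + \alpha\delta_j e_j\rangle$ changes sign will also guarantee that the leading-order expansion is valid uniformly over the finitely many $(i,j)$ pairs.

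The main obstacle, and the reason I would treat the derivative step carefully, is the non-differentiability of $x \mapsto |x|$ at $0$: when some $\langle w_i, w_j\rangle$ vanishes the classical gradient above is undefined. I would handle this by passing to right derivatives in $\alpha$. Because $\alpha \mapsto \langle w_i + \alpha\delta_i e_i, w_j + \alpha\delta_j e_j\rangle$ is affine in $\alpha$, the composition with $|\cdot|$ has a well-defined one-sided derivative at $\alpha = 0$, and the indicator convention used in the lemma simply picks out one particular one-sided value at these boundary points. With this refinement the Taylor bound still dominates higher-order terms for sufficiently small $\alpha$, and the proof goes through.
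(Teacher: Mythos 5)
Your proof is the same in spirit as the paper's: both are first-order perturbation arguments that choose each $\delta_i$ so that a right derivative at $\alpha=0$ is nonpositive and then take $\alpha$ small. The presentations differ, though. The paper defines $f_i(\alpha)$, in which only $w_i$ is perturbed while the other $w_j$ stay fixed, shows $f_i'(0)\le 0$, and then pieces the simultaneous perturbation together through a two-parameter quantity $g_j(\alpha;\hat\alpha)$ and a continuity argument whose details are hard to follow. You work directly with the simultaneous perturbation $f(\alpha)=\sum_{i}\sum_{j\ne i}|\theta(w_i+\alpha\delta_ie_i,\,w_j+\alpha\delta_je_j)|$, use the symmetry of $|\theta|$ to collapse its directional derivative to $2\sum_i\delta_i\rho_i$, and conclude by a one-sided Taylor bound. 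This is tidier and closer to how the $\delta_i$ would naturally be derived; in the generic case where every $\langle w_i,w_j\rangle\ne 0$, your argument is both correct and easier to check than the paper's.

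Where your write-up has a genuine gap is the boundary case $\langle w_i,w_j\rangle=0$. You assert that the indicator convention "simply picks out one particular one-sided value at these boundary points," but it picks the wrong one, and this is not a removable technicality. Because $e_i\perp e_j$, the quantity $\langle w_i+\alpha\delta_ie_i,\,w_j+\alpha\delta_je_j\rangle$ is affine in $\alpha$, and when $\langle w_i,w_j\rangle=0$ the right derivative of its absolute value at $\alpha=0$ is $|\delta_i\langle e_i,w_j\rangle+\delta_j\langle e_j,w_i\rangle|\ge 0$, not the signed quantity the indicator/sign formula yields. Consequently the prescribed $\delta_i$ need not make the true right derivative of $f$ nonpositive, and the lemma itself can fail there: take $C=2$, $w_1=(1,0)$, $w_2=(0,1)$, $e_1=(1,1)/\sqrt{2}$, $e_2=(1,-1)/\sqrt{2}$. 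The rule gives $\delta_1=\delta_2=-1$, yet $\langle w_1-\alpha e_1,\,w_2-\alpha e_2\rangle=-\sqrt{2}\,\alpha\ne 0$ for every $\alpha>0$, so the perturbed sum of $|\theta|$'s is strictly positive while the original sum is zero. The paper's own proof shares this blind spot (its computation of $p_{i,j}'(\alpha)\big|_{\alpha=0}$ likewise implicitly assumes $\langle w_i,w_j\rangle\neq 0$), so you haven't done worse than the authors, but the sentence dismissing the boundary case should be dropped: the correct repair is to add an explicit non-orthogonality hypothesis on the $w_i$ --- which holds generically in the paper's application --- or to redefine $\delta_i$ on the orthogonal pairs.
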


See the proof in Appendix~\ref{appendix.tech}. In Eq.~\ref{eq: svd_decomposition}, we denote $\mathbf{S}\mathbf{V}^\top = (w_1, \dots, w_C)$ and $\mathbf{V}^\top = (e_1, \dots, e_C)$. Each $w_i$ in the subspace $\mathbf{S}\mathbf{V}^\top$ is perturbed by $\alpha \delta_i e_i$, effectively shifting it in an orthogonal direction with magnitude $\alpha$. By Lemma~\ref{lemma: a_proper_addition}, there exist coefficients $\delta_1, \dots, \delta_C$ and a scalar $\alpha > 0$ such that Eq.~\ref{eq: less_inner_product} holds. The sign of each $\delta_i$ determines whether the shift occurs in the direction of $e_i$ or its opposite. This condition arises from the proof of Lemma~\ref{lemma: a_proper_addition}, where we analyze the derivative of the function $f_i(\alpha)$, defined as the sum of cosine similarities involving class $i$. By selecting $\delta_i$ accordingly, we ensure that the slope of $f_i(\alpha)$ at $\alpha = 0$ is negative, which guarantees that a small positive $\alpha$ decreases the cosine similarity. The choice of $\alpha$ is also important: if $\alpha$ is too small, the shift becomes negligible, whereas if it is too large, the reduction in cosine similarity may no longer be valid. In practice, we set $\alpha = 1$ in our experiments.

Defining $\mathbf{\Delta} = Diag\{\delta_1, \dots, \delta_C\}$, the lemma guarantees that the average cosine similarity among the columns of $(\mathbf{S} + \alpha \mathbf{\Delta}) \mathbf{V}^\top$ is smaller than that of $\mathbf{S}\mathbf{V}^\top$. Since the columns of $\mathbf{U} \in \mathbb{R}^{d \times C}$ form an orthogonal basis in $\mathbb{R}^d$, the transformations preserve inner products and norms: \vspace{0.5mm}
\begin{equation}
\langle \mathbf{U} w_i, \mathbf{U} w_j \rangle = \langle w_i, w_j \rangle, \quad ||\mathbf{U} w_i|| = ||w_i||, \quad ||\mathbf{U} w_j|| = ||w_j||
\end{equation}
Thus, the cosine similarity between transformed vectors remains the same, i.e., $\theta(\mathbf{U} w_i, \mathbf{U} w_j) = \theta(w_i, w_j)$. Therefore, the average cosine similarity among the columns of $\mathbf{U} (\mathbf{S} + \alpha \mathbf{\Delta}) \mathbf{V}^\top$
is also reduced compared to the original matrix $\hat{\mathbf{C}} = \mathbf{U} \mathbf{S} \mathbf{V}^\top$. 

Note that separability is measured as the sum of cosine similarities, but the class means are whitened. This procedure is closely related to Mahalanobis distance, which has been shown to outperform cosine similarity in related contexts \cite{goswami2023fecam}. By whitening, we effectively assess separation using Mahalanobis distance; after increasing the separation, we de-whiten the features to restore their original scale:\vspace{0.5mm}
\begin{equation}
\hat{\tilde{\mathbf{C}}} = \mathbf{U} \tilde{\mathbf{S}} \mathbf{V}^\top, \quad \tilde{\mathbf{C}} = \hat{\tilde{\mathbf{C}}} \boldsymbol{\Sigma}^{1/2}
\end{equation}
The separated class means are then used as target prototypes for the CP layer by setting $\mathbf{P}$ in Eq. \ref{cross-product} to $\tilde{\mathbf{C}}$, aligning samples with their respective classes while enhancing the separation between classes.

\subsection{Classifier}
\label{clssifier_layer}

The CP layer is an analytic module that adapts feature representations across tasks without modifying the underlying PTM. To increase its capacity, we extend it with multiple random projections. Since each RP is initialized independently, we learn a corresponding CP for each RP, all sharing the same set of target prototypes. Specifically, we define $H$ such heads, where each CP head $\mathbf{W}^{(h)}$ maps the random features to the same target prototype space. Given an input $\mathbf{x}$, each head produces a projection and the output is obtained by averaging across heads:
\begin{equation}
\mathbf{u}^{(h)} = \phi(\mathbf{x} \mathbf{R}^{(h)}) \mathbf{W}^{(h)}, \quad
\mathbf{u} = \frac{1}{H} \sum_{h=1}^{H} \mathbf{u}^{(h)}
\end{equation}
This aggregated representation $\mathbf{u}$ is then classified using NCM, which assigns it to the nearest target prototype in $\mathbf{P}$. As shown in Table~\ref{tab:ablation} (row 3), AnaCP with an NCM classifier already outperforms all baselines. However, NCM is often surpassed by more expressive classifiers. To further improve performance, we place an ELM classifier after the averaged CP representation.

Unlike PTM features and their random projections, which remain fixed, the CP outputs evolve as new tasks are introduced due to the incremental computation of the CP layer. Directly updating the ELM classifier incrementally (Eq.~\ref{ACIL}) may therefore lead to CF. To mitigate this, we employ a pseudo-replay strategy. We model the PTM features (input layer) as a multivariate Gaussian distribution parameterized by the class means and a shared covariance matrix, from which we generate pseudo-replay samples for training. This approach resembles that of \cite{zhang2023slca}, which also trains the classifier on generated features. However, whereas \cite{zhang2023slca} maintains a separate covariance matrix per class (leading to memory cost that grows with the number of classes), we use a single shared covariance matrix across all classes. This design significantly reduces memory usage while yielding comparable performance. Specifically, we reuse the class means and shared covariance matrix from Eq.~\ref{covariance-matrix} for feature generation. The generated samples are then processed by the RP and CP layers to form the classifier input, upon which the ELM is trained using Eq.~\ref{eq.classifier}.

\section{Experiments}
\subsection{Experimental Setup}
\label{sec-datasets-baselines}
\textbf{Datasets:} We conduct experiments on five publicly available datasets: CIFAR100 (100 classes) \citep{krizhevsky2009learning}, ImageNet-R (200 classes) \citep{hendrycks2021many}, CUB (200 classes) \citep{wah2011caltech}, TinyImageNet (200 classes) \citep{le2015tiny}, and Cars (196 classes) \citep{yang2015large}. For all datasets, we use the official train and test splits. Each dataset is split into 10 disjoint tasks by shuffling classes, and experiments are repeated with three different random seeds to account for variability in class-task assignments.

\textbf{Baselines:}  We compare AnaCP with a range of baselines, including prompt-learning methods (L2P \citep{wang2022learning}, DualPrompt \citep{wang2022dualprompt}, and CODA-Prompt \citep{smith2023coda}), fine-tuning method (SLCA \cite{zhang2023slca}), and analytic methods that treat the PTM as a frozen feature extractor. The analytic baselines include SimpleCIL \citep{janson2022simple}, SLDA \citep{hayes2020lifelong}, KLDA \citep{momeni2025continual}, GACL \citep{zhuang2024gacl}, APER \citep{zhou2024revisiting}, FeCAM \citep{goswami2023fecam}, and RanPac \cite{mcdonnell2023ranpac}, with some incorporating FSA. For a complete performance perspective, we also include results from joint linear probe, where a linear softmax classifier is trained on the PTM features using gradient descent with all tasks' data, and joint fine-tuning, which directly fine-tunes the PTM on the entire dataset and serves as the upper bound for CIL performance.

\textbf{Implementation Details:} 
For the PTMs, we use DINO-v2 \citep{oquabdinov2} and MoCo-v3 \citep{chen2021empirical}, both of which are self-supervised PTMs. This choice helps prevent information leakage, as supervised PTMs are exposed to class labels during training, some of which may reappear in CIL, leading to unintended information leakage. MoCo-v3 is pre-trained on ImageNet-1k -- a commonly used but relatively small dataset for pre-training in prior studies. However, because ImageNet-1k (or even ImageNet-21k) is considered limited for real-world applications, we also utilize DINO-v2, a more powerful model pre-trained on the substantially larger LVD-142M dataset. For joint fine-tuning, we employ LoRA adapters \citep{hu2021lora} instead of full fine-tuning, as we found this approach to be more accurate.

The prompt learning baselines are taken from the \citep{smith2023coda} repository, while SLCA, FeCAM, and RanPac are evaluated using their original implementations. All remaining analytic baselines are incorporated into a unified experimental setup, following their official implementations to ensure fair comparison.\footnote{The code of AnaCP is available at \href{https://github.com/SalehMomeni/AnaCP}{https://github.com/SalehMomeni/AnaCP}.} We also adopt FSA following \citep{zhou2024revisiting} before applying AnaCP to the frozen PTM as it helps improve accuracy. We use RP dimension $D=5000$, number of CP heads $H=3$, and number of generated feature representations per class for the classifier $R=100$ as the default configuration; ablations on other variants are included. The regularization parameter \( \lambda \) in Eq.~\ref{eq.classifier} for ELM is set to \(10^2\), and the coefficient \( \alpha \) for class mean repulsion is set to 1. These values were optimized on a validation set derived from the CIFAR100 training set and are consistently used across all datasets and both PTMs. All experiments are conducted on a single NVIDIA A100 GPU with 80GB VRAM.

\textbf{Evaluation Metric:} We evaluate model performance using two primary metrics: Last Accuracy ($A_{\text{Last}}$), the accuracy after completing all tasks, and Average Incremental Accuracy ($A_{\text{Avg}}$), calculated as $A_{\text{Avg}} = \frac{1}{T} \sum_{t=1}^{T} A_t$, where $A_t$ denotes the accuracy at the end of task $t$. Additionally, we measure running time and memory efficiency to assess the practicality of the methods. We also define relative error reduction as $\frac{A_I - A_0}{100 - A_0} \times 100$, where $A_0$ is the baseline accuracy and $A_I$ is the improved accuracy.

\begin{table}[ht]
\centering
\renewcommand{\arraystretch}{1.2}
\begin{adjustbox}{max width=\textwidth}
\begin{tabular}{|l|c|c||c|c||c|c||c|c||c|c|}
\hline
\multirow{2}{*}{\textbf{Method}}
& \multicolumn{2}{c||}{\textbf{CIFAR100}} 
& \multicolumn{2}{c||}{\textbf{ImageNet-R}} 
& \multicolumn{2}{c||}{\textbf{CUB}} 
& \multicolumn{2}{c||}{\textbf{TinyImageNet}} 
& \multicolumn{2}{c|}{\textbf{Cars}} \\
\cline{2-11}
& $A_{\text{avg}}$ & $A_{\text{last}}$ 
& $A_{\text{avg}}$ & $A_{\text{last}}$ 
& $A_{\text{avg}}$ & $A_{\text{last}}$ 
& $A_{\text{avg}}$ & $A_{\text{last}}$ 
& $A_{\text{avg}}$ & $A_{\text{last}}$ \\
\hline
Joint linear probe & - & $89.29_{\pm0.00}$ & - & $82.49_{\pm0.02}$ & - & $89.33_{\pm0.01}$ & - & $85.43_{\pm0.01}$ & - & $86.84_{\pm0.01}$ \\
Joint fine-tuning & - & $93.35_{\pm0.00}$ & - & $88.79_{\pm0.01}$ & - & $89.87_{\pm0.01}$ & - & $88.81_{\pm0.02}$ & - & $89.92_{\pm0.02}$ \\    
\hline
L2P & $88.50_{\pm0.38}$ & $84.16_{\pm0.35}$ & $84.77_{\pm0.64}$ & $79.57_{\pm0.13}$ & $82.48_{\pm1.25}$ & $72.45_{\pm0.25}$ & $89.39_{\pm0.18}$ & $84.45_{\pm0.45}$ & $54.29_{\pm2.02}$ & $40.83_{\pm1.12}$ \\
DualPrompt & $88.13_{\pm0.35}$ & $83.46_{\pm0.14}$ & $84.59_{\pm0.83}$ & $79.61_{\pm0.12}$ & $82.13_{\pm1.25}$ & $72.33_{\pm0.10}$ & $89.50_{\pm0.01}$ & $84.64_{\pm0.26}$ & $53.31_{\pm2.83}$ & $39.15_{\pm1.18}$ \\
CODA-Prompt & $90.87_{\pm0.39}$ & $86.83_{\pm0.01}$ & $86.43_{\pm0.61}$ & $82.39_{\pm0.01}$ & $83.11_{\pm1.07}$ & $73.08_{\pm0.25}$ & $90.58_{\pm0.07}$ & $85.50_{\pm0.52}$ & $58.49_{\pm2.32}$ & $42.63_{\pm0.71}$ \\
SLCA & $93.03_{\pm0.32}$ & $88.80_{\pm0.22}$ & $\underline{88.50}_{\pm0.51}$ & $\underline{84.78}_{\pm0.26}$ & $92.83_{\pm0.52}$ & $88.97_{\pm0.16}$ & $89.38_{\pm0.31}$ & $85.31_{\pm0.21}$ & $90.70_{\pm0.64}$ & $85.76_{\pm0.49}$ \\
SimpleCIL & $91.18_{\pm0.34}$ & $86.20_{\pm0.00}$ & $80.94_{\pm0.46}$ & $75.11_{\pm0.00}$ & $91.20_{\pm0.48}$ & $86.87_{\pm0.00}$ & $87.80_{\pm0.56}$ & $83.17_{\pm0.00}$& $66.75_{\pm0.39}$ & $55.36_{\pm0.00}$ \\
SLDA & $92.28_{\pm0.31}$ & $87.57_{\pm0.00}$ & $83.28_{\pm0.14}$ & $77.25_{\pm0.00}$ & $93.05_{\pm0.44}$ & $89.51_{\pm0.00}$ & $87.86_{\pm0.69}$& $82.69_{\pm0.00}$ & $89.97_{\pm0.70}$ & $87.20_{\pm0.00}$ \\
KLDA & $92.97_{\pm0.20}$ & $88.64_{\pm0.16}$ & $82.37_{\pm0.19}$ & $77.75_{\pm0.01}$ & $93.11_{\pm0.33}$ & $\underline{89.54}_{\pm0.11}$ & $88.20_{\pm0.64}$ & $82.91_{\pm0.08}$ & $91.30_{\pm0.63}$ & $86.40_{\pm0.03}$ \\
GACL & $93.85_{\pm0.21}$ & $\underline{90.10}_{\pm0.01}$ & $84.91_{\pm0.23}$ & $81.74_{\pm0.15}$ & $\underline{93.22}_{\pm0.47}$ & $89.37_{\pm0.10}$ & $\underline{90.69}_{\pm0.42}$ & $\underline{86.70}_{\pm0.03}$ & $89.76_{\pm0.48}$ & $84.52_{\pm0.18}$ \\
APER & $93.05_{\pm0.43}$ & $88.59_{\pm0.14}$ & $86.32_{\pm0.12}$ & $80.61_{\pm0.04}$ & $91.18_{\pm0.51}$ & $86.85_{\pm0.12}$ & $89.26_{\pm0.65}$ & $84.50_{\pm0.21}$ & $75.40_{\pm0.97}$ & $65.11_{\pm0.80}$ \\
FeCAM & $93.27_{\pm0.38}$ & $89.08_{\pm0.70}$ & $84.61_{\pm0.46}$ & $79.27_{\pm0.50}$ & $91.59_{\pm0.72}$ & $87.63_{\pm0.53}$ & $89.47_{\pm0.50}$ & $84.75_{\pm0.25}$ & $90.64_{\pm0.98}$ & $85.72_{\pm0.72}$ \\
RanPAC & $\underline{94.10}_{\pm0.34}$ & $90.09_{\pm0.79}$ & $87.96_{\pm0.19}$ & $84.41_{\pm0.20}$ & $92.37_{\pm0.68}$ & $88.44_{\pm0.37}$ & $89.55_{\pm0.48}$ & $84.67_{\pm1.04}$ & $\underline{91.48}_{\pm0.87}$ & $\underline{87.48}_{\pm0.11}$ \\
\hline
AnaCP - $\Sigma_{y}$ & $95.43_{\pm0.18}$ & $92.15_{\pm0.11}$ & $90.38_{\pm0.09}$ & $86.68_{\pm0.03}$ & $93.82_{\pm0.28}$ & $90.61_{\pm0.13}$ & $91.81_{\pm0.26}$ & $87.71_{\pm0.29}$ & $94.29_{\pm0.59}$ & $90.87_{\pm0.20}$ \\
AnaCP & $\mathbf{95.43}_{\pm0.20}$ & $\mathbf{92.15}_{\pm0.09}$ & $\mathbf{90.37}_{\pm0.11}$ & $\mathbf{86.60}_{\pm0.04}$ & $\mathbf{93.84}_{\pm0.31}$ & $\mathbf{90.57}_{\pm0.15}$ & $\mathbf{91.57}_{\pm0.29}$ & $\mathbf{87.35}_{\pm0.29}$ & $\mathbf{94.16}_{\pm0.61}$ & $\mathbf{90.65}_{\pm0.24}$ \\
\hline
Rel. Error & $22.5\%\downarrow$ & $20.7\%\downarrow$ & $16.2\%\downarrow$ & $11.9\%\downarrow$ & $9.1\%\downarrow$ & $9.8\%\downarrow$ & $9.4\%\downarrow$ & $4.8\%\downarrow$ & $31.4\%\downarrow$ & $25.3\%\downarrow$ \\
\hline
\end{tabular}
\end{adjustbox}
\vspace{2mm}
\caption{Comparison of AnaCP with baselines using \textbf{DINO-v2} as the PTM. All methods are replay-free. AnaCP employs a shared covariance matrix for pseudo-replay feature generation, while AnaCP-$\Sigma_{y}$ uses a separate covariance matrix for each class; the latter is reported only as an ablation since it requires more parameters. The best result (excluding AnaCP-$\Sigma_{y}$) is shown in bold, and the second-best is underlined. Rel. Error indicates the reduction in error rate achieved by AnaCP compared to the best baseline for the column.
}
\vspace{-2mm}
\label{dino-results}
\end{table}

\begin{table}[ht]
\centering
\renewcommand{\arraystretch}{1.2}
\begin{adjustbox}{max width=\textwidth}
\begin{tabular}{|l|c|c||c|c||c|c||c|c||c|c|}
\hline
\multirow{2}{*}{\textbf{Method}}
& \multicolumn{2}{c||}{\textbf{CIFAR100}} 
& \multicolumn{2}{c||}{\textbf{ImageNet-R}} 
& \multicolumn{2}{c||}{\textbf{CUB}} 
& \multicolumn{2}{c||}{\textbf{TinyImageNet}} 
& \multicolumn{2}{c|}{\textbf{Cars}} \\
\cline{2-11}
& $A_{\text{avg}}$ & $A_{\text{last}}$ 
& $A_{\text{avg}}$ & $A_{\text{last}}$ 
& $A_{\text{avg}}$ & $A_{\text{last}}$ 
& $A_{\text{avg}}$ & $A_{\text{last}}$ 
& $A_{\text{avg}}$ & $A_{\text{last}}$ \\
\hline
Joint linear probe & - & $83.81_{\pm0.01}$ & - & $59.16_{\pm0.02}$ & - & $63.19_{\pm0.02}$ & - & $81.12_{\pm0.02}$ & - & $41.95_{\pm0.02}$ \\
Joint fine-tuning & - & $87.45_{\pm0.02}$ & - & $73.54_{\pm0.03}$ & - & $79.24_{\pm0.01}$ & - & $82.54_{\pm0.03}$ & - & $81.98_{\pm0.02}$ \\
\hline
SLCA & $69.54_{\pm0.58}$ & $69.54_{\pm0.89}$ & $56.66_{\pm1.12}$ & $44.63_{\pm0.76}$ & $74.42_{\pm0.51}$ & $70.88_{\pm0.36}$ & $68.93_{\pm0.59}$ & $63.53_{\pm0.26}$ & $55.47_{\pm0.88}$ & $53.00_{\pm0.41}$ \\
SimpleCIL & $79.27_{\pm0.47}$ & $70.89_{\pm0.00}$ & $45.83_{\pm0.83}$ & $37.75_{\pm0.00}$ & $56.36_{\pm0.61}$ & $45.83_{\pm0.00}$ & $77.31_{\pm0.46}$ & $71.01_{\pm0.00}$ & $28.99_{\pm0.36}$ & $19.79_{\pm0.00}$ \\
SLDA & $86.68_{\pm0.51}$ & $79.14_{\pm0.00}$ & $65.37_{\pm1.08}$ & $58.22_{\pm0.00}$ & $\underline{79.61}_{\pm0.60}$ & $72.39_{\pm0.00}$ & $81.60_{\pm0.78}$ & $75.30_{\pm0.00}$ & $78.33_{\pm0.32}$ & $69.72_{\pm0.00}$ \\
KLDA & $88.27_{\pm0.29}$ & $81.15_{\pm0.18}$ & $63.64_{\pm0.87}$ & $56.92_{\pm0.27}$ & $78.49_{\pm0.27}$ & $71.39_{\pm0.14}$ & $82.46_{\pm0.72}$ & $75.67_{\pm0.05}$ & $77.33_{\pm0.20}$ & $69.11_{\pm0.09}$ \\
GACL & $\underline{88.89}_{\pm0.31}$ & $\underline{82.10}_{\pm0.10}$ & $64.02_{\pm1.19}$ & $56.80_{\pm0.48}$ & $76.12_{\pm0.32}$ & $67.43_{\pm0.35}$ & $\underline{84.72}_{\pm0.58}$ & $\underline{78.78}_{\pm0.10}$ & $72.48_{\pm0.45}$ & $61.55_{\pm0.26}$ \\
APER & $80.51_{\pm1.25}$ & $71.36_{\pm1.36}$ & $64.65_{\pm0.18}$ & $54.79_{\pm0.68}$ & $72.09_{\pm1.59}$ & $63.34_{\pm0.57}$ & $80.06_{\pm0.74}$ & $72.84_{\pm0.31}$ & $45.09_{\pm2.78}$ & $32.65_{\pm2.06}$ \\
FeCAM & $84.73_{\pm1.24}$ & $77.23_{\pm2.00}$ & $64.61_{\pm0.70}$ & $55.45_{\pm0.69}$ & $73.61_{\pm0.80}$ & $66.50_{\pm0.13}$ & $81.28_{\pm1.44}$ & $74.49_{\pm1.04}$ & $74.55_{\pm0.64}$ & $66.61_{\pm0.87}$ \\
RanPAC & $86.99_{\pm1.46}$ & $79.37_{\pm1.84}$ & $\underline{70.68}_{\pm0.17}$ & $\underline{62.50}_{\pm0.81}$ & $78.93_{\pm0.75}$ & $\underline{72.69}_{\pm0.37}$ & $82.97_{\pm1.01}$ & $76.25_{\pm0.60}$ & $\underline{80.39}_{\pm0.23}$ & $\underline{72.61}_{\pm0.65}$ \\
\hline
AnaCP - $\Sigma_{y}$ & $90.16_{\pm0.36}$ & $83.87_{\pm0.13}$ & $74.88_{\pm0.48}$ & $67.91_{\pm0.48}$ & $84.37_{\pm0.73}$ & $79.05_{\pm0.63}$ & $85.88_{\pm0.58}$ & $80.02_{\pm0.32}$ & $85.08_{\pm0.94}$ & $79.24_{\pm0.63}$ \\
AnaCP & $\mathbf{89.91}_{\pm0.43}$ & $\mathbf{83.70}_{\pm0.26}$ & $\mathbf{74.60}_{\pm0.54}$ & $\mathbf{67.30}_{\pm0.48}$ & $\mathbf{84.21}_{\pm0.76}$ & $\mathbf{78.61}_{\pm0.47}$ & $\mathbf{85.43}_{\pm0.49}$ & $\mathbf{79.32}_{\pm0.51}$ & $\mathbf{84.78}_{\pm0.87}$ & $\mathbf{78.70}_{\pm0.76}$ \\
\hline
Rel. Error & $9.1\%\downarrow$ & $8.9\%\downarrow$ & $13.3\%\downarrow$ & $12.7\%\downarrow$ & $22.5\%\downarrow$ & $21.6\%\downarrow$ & $4.6\%\downarrow$ & $2.54\%\downarrow$ & $22.3\%\downarrow$ & $22.2\%\downarrow$ \\
\hline
\end{tabular}
\end{adjustbox}
\vspace{2mm}
\caption{Comparison of AnaCP with baselines using \textbf{MoCo-v3} as the PTM. The prompt learning baselines are excluded here due to their much lower accuracy, which is also the case in Table \ref{dino-results}.
}
\vspace{-4.5mm}
\label{moco-results}
\end{table}

\subsection{Comparison with Baselines}
\label{sec.result-analysis}
Table \ref{dino-results} presents the main results of our experiments, with all methods using DINO-v2 as the PTM. AnaCP consistently outperforms all baselines across the evaluated datasets, achieving an absolute improvement in last accuracy ranging from 1.03\% to 3.17\% across different datasets, corresponding to a relative error reduction between 4.8\% and 31.4\%. We can observe that using replay features with a shared covariance (AnaCP) achieves similar results to class-specific covariances (AnaCP - $\Sigma_{y}$) while significantly reducing the number of parameters. AnaCP also surpasses the joint linear probe on all datasets and achieves comparable accuracy to joint fine-tuning, which represents the upper bound for CIL. Notably, on two datasets, AnaCP even exceeds this upper bound, while on the remaining datasets, the largest accuracy gap is only around 2\%. These results highlight that AnaCP's feature adaptation is highly effective, even without directly training the PTM on each task.

We also evaluate AnaCP using MoCo-v3 as the PTM, as presented in Table \ref{moco-results}.
AnaCP outperforms the baselines by a larger margin with MoCo-v3 than with DINO-v2, as the weaker features of MoCo-v3 make the CP layer more impactful. In this case, the gap to joint fine-tuning increases to 6\%, which is expected since MoCo-v3 is a weaker PTM and benefits more from full fine-tuning. However, given that both PTMs have the same architecture (number of layers and hidden size), there is no reason to prefer a weaker PTM in real-world applications.

\subsection{Analysis of Catastrophic Forgetting}
The CP layer in our method is inherently immune to CF, as it operates on a frozen PTM and updates its statistics (prototypes and the Gram matrix) incrementally without overwriting prior information. In contrast, the final ELM classifier relies on pseudo-replay, which in principle could introduce some forgetting. In practice, however, we found this effect to be minimal. To verify this, we evaluate under the task-incremental learning (TIL) setting, where the task identity is provided at inference time. Specifically, we present the accuracy matrix $A[t][i]$, where each entry denotes the accuracy on task $i$ after training on the first $t$ tasks of CIFAR-100 (10-task split) using DINO-v2 as the backbone.

\begin{figure}[ht]
\vspace{-2mm}
    \centering
\includegraphics[width=0.58\textwidth]{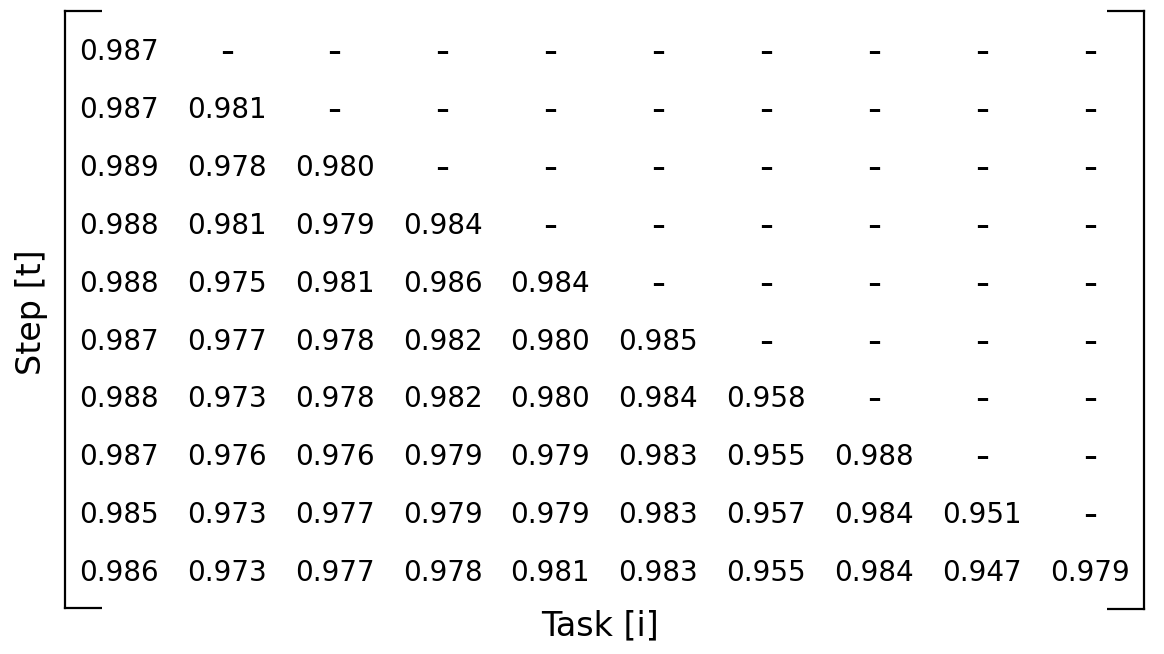}
\vspace{-2mm}
    \caption{Accuracy matrix $A[t][i]$ on CIFAR-100 with 10 task splits using DINO-v2 as the backbone under the TIL setting. Each entry shows the accuracy on task $i$ after training on the first $t$ tasks.}
    \label{fig:forgetting-matrix}
    \vspace{-2mm}
\end{figure}

We adopt the TIL setting for this analysis because it offers a clearer view of forgetting. In CIL, accuracy inevitably decreases as more tasks are introduced, not necessarily due to forgetting, but because the classification problem becomes harder with a growing number of classes. By contrast, TIL keeps the classification difficulty fixed, since each task has a constant number of classes and the task ID is known at inference. Under this setup, forgetting would appear as a decline in accuracy on earlier tasks. However, as shown in Figure \ref{fig:forgetting-matrix}, the accuracy for task 1 (first column) remains nearly constant across all steps. The same holds for other tasks, demonstrating that our method effectively preserves knowledge from prior tasks.

\begin{table}[ht]
\centering
\renewcommand{\arraystretch}{1.2}
\begin{adjustbox}{max width=0.8\textwidth}
\begin{tabular}{|c|cccccc||ccccc|}
\hline
\textbf{row} & \textbf{CLS} &  \textbf{CP} & \textbf{NR} & \textit{H} & \textit{D} & \textit{R} & \textbf{CIFAR100} & \textbf{ImageNet-R} & \textbf{CUB} & \textbf{TinyImageNet} & \textbf{Cars} \\ 
\hline
1 & NCM & \ding{55} & - & - & - & - & 88.13$_{\pm0.21}$ & 81.60$_{\pm0.08}$ & 86.95$_{\pm0.36}$ & 83.63$_{\pm0.48}$ & 69.51$_{\pm0.87}$ \\
2 & NCM & \ding{51} & \ding{55} & 3 & 5000 & - & 91.31$_{\pm0.03}$ & 83.29$_{\pm0.45}$ & 89.52$_{\pm0.10}$ & 85.89$_{\pm0.26}$ & 86.25$_{\pm0.25}$ \\
3 & NCM & \ding{51} & \ding{51} & 3 & 5000 & - & 91.86$_{\pm0.05}$ & 86.01$_{\pm0.11}$ & 89.88$_{\pm0.32}$ & 87.02$_{\pm0.27}$ & 89.48$_{\pm0.29}$ \\ 
\hline
4 & ELM & \ding{55} & - & - & 5000 & - & 91.12$_{\pm0.10}$ & 85.14$_{\pm0.26}$ & 89.75$_{\pm0.19}$ & 86.48$_{\pm0.12}$ & 89.32$_{\pm0.20}$ \\
5 & ELM & \ding{51} & \ding{55} & 3 & 5000 & 100 & 91.87$_{\pm0.02}$ & 85.73$_{\pm0.19}$ & 90.39$_{\pm0.14}$ & 86.69$_{\pm0.28}$ & 90.24$_{\pm0.20}$ \\
6 & ELM & \ding{51} & \ding{51} & 3 & 5000 & 100 & 92.15$_{\pm0.09}$ & 86.60$_{\pm0.04}$ & 90.48$_{\pm0.15}$ & 87.71$_{\pm0.29}$ & 90.65$_{\pm0.24}$ \\ 
\hline
7 & ELM & \ding{51} & \ding{51} & 3 & 5000 & 20 & 92.14$_{\pm0.06}$ & 86.39$_{\pm0.04}$ & 90.43$_{\pm0.18}$ & 87.29$_{\pm0.32}$ & 90.45$_{\pm0.23}$ \\
8 & ELM & \ding{51} & \ding{51} & 3 & 5000 & 50 & 92.13$_{\pm0.04}$ & 86.47$_{\pm0.05}$ & 90.46$_{\pm0.10}$ & 87.49$_{\pm0.32}$ & 90.57$_{\pm0.16}$ \\
9 & ELM & \ding{51} & \ding{51} & 3 & 5000 & 100 & 92.15$_{\pm0.09}$ & 86.60$_{\pm0.04}$ & 90.48$_{\pm0.15}$ & 87.71$_{\pm0.29}$ & 90.65$_{\pm0.24}$ \\ 
\hline
10 & ELM & \ding{51} & \ding{51} & 1 & 5000 & 100 & 91.99$_{\pm0.01}$ & 85.72$_{\pm0.24}$ & 90.36$_{\pm0.13}$ & 87.12$_{\pm0.24}$ & 90.35$_{\pm0.24}$ \\
11 & ELM & \ding{51} & \ding{51} & 3 & 5000 & 100 & 92.15$_{\pm0.09}$ & 86.60$_{\pm0.04}$ & 90.48$_{\pm0.15}$ & 87.71$_{\pm0.29}$ & 90.65$_{\pm0.24}$ \\
12 & ELM & \ding{51} & \ding{51} & 5 & 5000 & 100 & 92.20$_{\pm0.04}$ & 86.62$_{\pm0.10}$ & 90.51$_{\pm0.05}$ & 87.76$_{\pm0.29}$ & 90.68$_{\pm0.18}$ \\ 
\hline
13 & ELM & \ding{51} & \ding{51} & 3 & 1000 & 100 & 90.64$_{\pm0.10}$ & 84.01$_{\pm0.29}$ & 89.56$_{\pm0.15}$ & 85.48$_{\pm0.17}$ & 88.23$_{\pm0.21}$ \\
14 & ELM & \ding{51} & \ding{51} & 3 & 2000 & 100 & 91.35$_{\pm0.08}$ & 85.15$_{\pm0.19}$ & 89.97$_{\pm0.18}$ & 86.24$_{\pm0.22}$ & 89.93$_{\pm0.10}$ \\
15 & ELM & \ding{51} & \ding{51} & 3 & 5000 & 100 & 92.15$_{\pm0.09}$ & 86.60$_{\pm0.04}$ & 90.48$_{\pm0.15}$ & 87.71$_{\pm0.29}$ & 90.65$_{\pm0.24}$ \\
16 & ELM & \ding{51} & \ding{51} & 3 & 10000 & 100 & 92.75$_{\pm0.10}$ & 86.86$_{\pm0.10}$ & 90.60$_{\pm0.10}$ & 88.11$_{\pm0.24}$ & 90.73$_{\pm0.16}$ \\
\hline
\end{tabular}
\end{adjustbox}
\vspace{4mm}
\caption{Ablation studies on AnaCP with DINO-v2 as the PTM. All reported values are $A_{\text{last}}$. CLS indicates the method employed in the classifier (Section ~\ref{clssifier_layer}). CP indicates whether the proposed contrastive projection is applied.  NR (negative repulsion) shows whether class mean separation is improved through Lemma \ref{lemma: a_proper_addition}; when not used, the original class means serve as target prototypes. The table also includes variations in the RP dimension ($D$), the number of heads in the CP layer ($H$), and the number of feature representations generated per class ($R$).}
\vspace{-5mm}
\label{tab:ablation}
\end{table}

\subsection{Ablation Studies} 
\label{sec.ablation}

We conduct a series of ablations to evaluate the impact of key components, as shown in Table \ref{tab:ablation}. \textbf{Rows 1-3} examine the effect of the CP layer. Removing CP and directly applying an NCM classifier to PTM features results in a significant drop in absolute accuracy between 2.93\% to 19.97\% (row 1 vs. row 3). This highlights the importance of CP in enhancing feature representations, as illustrated in Figure~\ref{fig:pipeline}. The effect of negative repulsion (NR), which increases the separation between class means (Section ~\ref{sec.NR}) is also evident; disabling NR and relying only on positive alignment with the original class means as target prototypes leads to an accuracy reduction from 0.38\% to 3.23\% (row 2 vs. row 3). This confirms that NR is a critical component for improving class separability.

The role of classifier is explored in \textbf{Rows 4-6}, where NCM is replaced by ELM. We find that using an ELM classifier on average improves the performance by 0.66\% (row 3 vs. row 6). Although this requires generating feature representations as pseudo-replay for training the ELM classifier, as explained in Section \ref{clssifier_layer}. An arbitrary number of feature representations can be sampled from the Gaussian distribution with negligible computational overhead, as detailed in the running time analysis. We observe that increasing the number of generated feature representations $R$ improves performance on some datasets \textbf{(Rows 7-9)}. The effects of various numbers of CP heads $H$ \textbf{(Rows 10-12)} and RP dimensions $D$ \textbf{(Rows 13-16)} are also explored. Generally, increasing these values improves accuracy, albeit at the cost of additional parameters. For the main results, we have used $D=5000$ and $H=3$. We note that even with $H = 1$, AnaCP outperforms all baselines.

\vspace{-0.4mm}
\subsection{Memory and Running Time Efficiency}
\vspace{-0.4mm}
AnaCP needs to save \( C \) class means, each of size \( d \), and a shared covariance matrix of size \( d \times d \) for feature whitening and pseudo-replay feature generation at the input layer (Figure~\ref{fig:pipeline}). While a separate covariance matrix can be maintained for each class, this would result in a significant increase in memory as the number of classes grows. As shown in Table \ref{dino-results}, using a shared covariance matrix does not compromise accuracy. Each CP head maintains the following: (1) an RP matrix of size \( d \times D \), (2) a \( D \times D \) Gram matrix, (3) \( C \) class prototypes (or means) in the random feature space, each of size \( D \), and (4) a projection matrix \( W \) of size \( d \times D \). We have \( H \) such heads in total. The target prototypes do not need to be stored, as they can be derived from the class means. The classifier layer also maintains an RP matrix of size \( d \times D \) and \( W \) of size \( C \times D \). Unlike the CP layer, this layer doesn't need to store the Gram matrix for incremental updates, as it is computed from generated feature representations after each task.

For a typical configuration with $C=200$, $d=768$, $D=5000$, and $H = 3$, the total number of stored parameters amounts to approximately 106.6 million. The majority of these parameters belong to the \( D \times D \) Gram matrices, which does not increase with the number of classes. Also, none of these parameters are trainable, making them suitable for storage in reduced-precision formats such as float8. This would further reduce the total memory usage to approximately 102 MiB.

While AnaCP may have a higher parameter count compared to some other methods, the design is balanced with exceptional computational efficiency. For example, CIFAR100 with the DINO-v2 PTM requires only 9 minutes and 18 seconds for training. 6 minutes and 5 seconds of this time are dedicated to the FSA, and 3 minutes and 8 seconds are spent passing inputs through the PTM to extract features. The core operations of AnaCP, including updating the CP layer and training the classifier with generated features, collectively take only 5 seconds. Thus, despite the larger parameter count, these operations are extremely fast. For comparison, a baseline such as CODA-Prompt that requires task-specific training takes 3 hours and 47 minutes to complete training on CIFAR100, and joint fine-tuning requires 1 hour and 58 minutes.

\vspace{-0.4mm}
\subsection{Discussion} 
\vspace{-0.4mm}
Our results support the proposition that a strong PTM is the key to CL, and that representation-level forgetting is not the primary limitation. Even simple CIL strategies, when paired with a fixed PTM, can achieve near-optimal performance. This view aligns with neuroscience evidence that the human brain maintains stable representations, despite minor drift over time \citep{gonzalez2019persistence,roth2023representation}. Cortical activity also preserves a stable similarity structure that supports consistent perception and behavior, even as neural responses gradually shift \citep{mcmahon2014face,roth2023representation}. The human brain can thus be viewed as an innate PTM refined through evolution, which provides a stable foundation for continual adaptation. Similarly, large-scale pre-training in machine learning resembles this biological evolution and development, producing reusable representations that enable CL without any forgetting. For further discussion, see \cite{momeni2025achieving}.

\vspace{-0.6mm}
\section{Conclusion}
\vspace{-0.6mm}
CIL poses a significant challenge in continual learning. Recent analytic methods with PTMs have shown promise. However, since they cannot learn or adapt features obtained from PTMs to suit specific CIL tasks, their performance is still suboptimal. This paper proposed a novel and principled method, called AnaCP, to adapt features extracted from PTMs analytically, which offers a highly efficient and accurate solution for CIL. Empirical evaluations demonstrate that AnaCP outperforms strong baselines, but more importantly, if paired with a strong PTM, its accuracy can be comparable to the joint training upper bound. 

\textbf{Limitations:} 
When using a strong PTM like DINO-v2, AnaCP achieves accuracy on par with the joint fine-tuning upper bound. However, with a weaker PTM such as MoCo-v3, AnaCP cannot match joint fine-tuning accuracy due to the critical role of PTM features played in our method. While strong PTMs are readily available today, improving AnaCP’s performance with weaker PTMs remains a meaningful goal. Additionally, our work currently focuses CIL. We believe AnaCP can be extended to TIL and domain-incremental learning (DIL) scenarios with appropriate adaptations.

\section*{Acknowledgments}
The work of Saleh Momeni and Bing Liu was supported in part by three NSF grants (IIS-2229876, IIS-1910424, and CNS-2225427), and an NVIDIA's Academia Grant, which provides cloud compute via its Saturn Cloud.

\bibliographystyle{unsrt}
\bibliography{neurips_2025.bib}

\newpage
\appendix

\section{Technical Appendices}
\label{appendix.tech}

\begin{lemma}[Lemma \ref{lemma: a_proper_addition}]
Denote $w_1, \dots, w_C \in \mathbb{R}^C$ as arbitrary vectors, and $e_1, \dots, e_C \in \mathbb{R}^C$ as a set of orthogonal bases of $\mathbb{R}^C$. Denote $\langle x, y \rangle = x^\top y$ as the inner product and $\theta(x, y) = \frac{x^\top y}{||x|| \cdot ||y||}$ as the cosine similarity. There exist $\alpha > 0$ and 
$$
\delta_i = \left\{\begin{aligned}
1,& ~\text{if}~ \sum_{j \neq i} \frac{1}{||w_j||} \left[(2 \cdot 1_{\langle w_i, w_j \rangle \ge 0}) \cdot \langle e_i, w_j \rangle \cdot||w_i||^2 - \langle e_i, w_i \rangle \cdot |\langle w_i, w_j \rangle| \right] < 0, \\
0,& ~\text{if}~ \sum_{j \neq i} \frac{1}{||w_j||} \left[(2 \cdot 1_{\langle w_i, w_j \rangle \ge 0}) \cdot \langle e_i, w_j \rangle \cdot||w_i||^2 - \langle e_i, w_i \rangle \cdot |\langle w_i, w_j \rangle| \right] = 0, \\
-1,& ~\text{else}, \\
\end{aligned}
\right.
$$ s.t. 
\begin{equation}
\sum_i \sum_{j \neq i} |\theta ( w_i + \alpha \delta_i e_i, w_j + \alpha \delta_j e_j ) | \le \sum_i \sum_{j \neq i} |\theta ( w_i, w_j )|.
\label{eq:less_inner_product}
\end{equation}
\end{lemma}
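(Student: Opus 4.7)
The plan is to reduce the claim to a first-order slope calculation. Define
\[
f(\alpha) \;=\; \sum_i \sum_{j \neq i} \bigl|\theta(w_i + \alpha \delta_i e_i, \, w_j + \alpha \delta_j e_j)\bigr|,
\]
so that the desired inequality becomes $f(\alpha)\le f(0)$. Since $\delta_i$ is prescribed explicitly in terms of the $w$'s and $e$'s, the real task is to verify that with that prescribed choice the right derivative $f'(0^+)$ is non-positive; existence of a small $\alpha>0$ satisfying the inequality then follows from a Taylor estimate applied to each summand.

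To compute $f'(0^+)$ I would differentiate termwise by the chain rule. For a pair $(i,j)$ with $\langle w_i,w_j\rangle\neq 0$, differentiating $\theta(u,v)=\langle u,v\rangle/(\|u\|\|v\|)$ in direction $e_i$ of the first argument at the base point $(w_i,w_j)$ and multiplying by $\mathrm{sign}(\langle w_i,w_j\rangle)$ yields
\[
\partial_1|\theta|(w_i,w_j;e_i) \;=\; \frac{\mathrm{sign}(\langle w_i,w_j\rangle)\,\langle e_i,w_j\rangle\,\|w_i\|^2 \;-\; \langle e_i,w_i\rangle\,|\langle w_i,w_j\rangle|}{\|w_i\|^{3}\|w_j\|}.
\]
Each ordered pair in the sum contributes a $\delta_i$-term (from moving $w_i$) and a $\delta_j$-term (from moving $w_j$). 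Using the symmetry $|\theta(u,v)|=|\theta(v,u)|$, relabeling $i\leftrightarrow j$ in the double sum collapses the $\delta_j$-contributions into an exact duplicate of the $\delta_i$-contributions, giving
\[
f'(0^+) \;=\; 2\sum_{i} \delta_i \, L_i, \qquad L_i \;:=\; \sum_{j\neq i}\partial_1|\theta|(w_i,w_j;e_i).
\]
The positive factor $\|w_i\|^{-3}$ pulls out of $L_i$, and what remains in the bracketed sum is, up to sign convention, exactly the quantity used to define $\delta_i$ in the statement. The prescribed rule therefore amounts to $\delta_i=-\mathrm{sign}(L_i)$ (with $\delta_i=0$ when $L_i=0$), so $\delta_i L_i=-|L_i|\le 0$ for every $i$ and hence $f'(0^+)\le 0$.

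The main obstacle I foresee is the non-smoothness of $|\theta|$ on the locus where $\theta(w_i,w_j)=0$: at such base points $|\theta|$ is only one-sided directionally differentiable, with directional derivative $|\partial_1\theta|$ rather than $\mathrm{sign}(\theta)\,\partial_1\theta$, which could add a non-negative term to $f'(0^+)$. In the generic case where no pair of the $w_i$'s is orthogonal, the derivative argument above is complete, and any $\alpha>0$ smaller than the common radius of validity of the linearization works. For degenerate configurations I would either bound the $O(\alpha)$ contribution from orthogonal pairs against the strictly negative slope contributed by the remaining non-degenerate terms, or recover the claim by a limiting argument from a small perturbation of $(w_1,\dots,w_C)$; the fully degenerate case in which every $L_i=0$ forces $\delta_i=0$ for all $i$, so no perturbation is made and the inequality holds with equality.
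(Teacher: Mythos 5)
Your proof is correct and arrives at the same first-order criterion as the paper, but by a cleaner route. The paper perturbs one vector at a time, defining $f_i(\alpha)=\sum_{j\neq i}|\theta(w_i+\alpha\delta_i e_i,w_j)|$, showing $f_i'(0)\le 0$, and then invoking a loosely stated continuity argument via an auxiliary $g_j(\alpha;\hat\alpha)$ to pass to simultaneous perturbations. You instead differentiate the full double sum directly at $\alpha=0$ and use the symmetry $|\theta(u,v)|=|\theta(v,u)|$ to identify the $\delta_j$-contribution of the ordered pair $(i,j)$ with the $\delta_i$-contribution of $(j,i)$, collapsing $f'(0^+)$ to $2\sum_i\delta_i L_i$; this avoids the paper's two-stage bookkeeping entirely and makes the choice $\delta_i=-\mathrm{sign}(L_i)$ transparent. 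Your concern about the non-smoothness of $|\theta|$ on the locus $\langle w_i,w_j\rangle=0$ is well-founded and not merely a formality: there the one-sided derivative is $|\partial_1\theta|\ge 0$ rather than $\mathrm{sign}(\theta)\,\partial_1\theta$, and one can exhibit configurations (already with $C=2$, $w_1\perp w_2$, and $e_1,e_2$ not aligned with the $w$'s) in which the prescribed $\delta_i$'s strictly increase the sum for every $\alpha>0$; the paper's derivation silently applies the pointwise rule $d|x|/dx=2\cdot 1_{x\ge 0}-1$ and carries the same gap, so your proposed restriction to the generic non-orthogonal case is a necessary hypothesis rather than an optional refinement. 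Finally, one small correction to your phrasing: you say the bracketed sum in the lemma's $\delta_i$-condition matches your $L_i$ ``up to sign convention,'' but your $L_i$ correctly carries $\mathrm{sign}(\langle w_i,w_j\rangle)=2\cdot 1_{\langle w_i,w_j\rangle\ge 0}-1$, whereas the statement writes $(2\cdot 1_{\langle w_i,w_j\rangle\ge 0})$ with the $-1$ dropped---a typo in the paper, which its own displayed formula for $f_i'(0)$ restores---and the two expressions differ by an additive term, not merely a sign.
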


\begin{proof}
Let 
$$
f_i (\alpha) = \sum_{j \neq i} \frac{|\langle w_j, w_i + \alpha \delta_i e_i \rangle|}{||w_j|| \cdot ||w_i + \alpha \delta_i e_i||}.
$$
For simplicity, denote 
$$p_{i,j} (\alpha) = |\langle w_i + \alpha \delta_i e_i, w_j + \alpha \delta_j e_j \rangle| = |\langle w_i, w_j \rangle + \alpha \delta_i \langle e_i, w_j \rangle| $$
and 
$$q_{i} (\alpha) = ||w_i + \alpha \delta_i e_i|| = \sqrt{\langle w_i + \alpha \delta_i e_i, w_i + \alpha \delta_i e_i \rangle}.$$
Therefore, we have $$f_i(\alpha) = \sum_{j \neq i} \frac{1}{||w_j||} \cdot \frac{p_{i,j}(\alpha)}{q_{i}(\alpha)}.$$
Since $\frac{d |x|}{dx} = 2 \cdot 1_{x \ge 0} - 1$, we have $$p'_{i, j}(\alpha)|_{\alpha = 0} = (2 \cdot 1_{\langle w_i, w_j \rangle \ge 0}) \cdot \delta_i \langle e_i, w_j \rangle. $$
Since $\langle w_i + \alpha \delta_i e_i, w_i + \alpha \delta_i e_i \rangle = \langle w_i, w_i \rangle + 2 \delta_i \langle e_i, w_i \rangle + \alpha^2$, we have $$q'_{i} (\alpha)|_{\alpha = 0} = \frac{1}{2}\frac{2 \delta_i \langle e_i, w_i \rangle + 2\alpha}{\sqrt{\langle w_i + \alpha \delta_i e_i, w_i + \alpha \delta_i e_i \rangle}}|_{\alpha = 0} = \frac{\delta_i \langle e_i, w_i \rangle}{||w_i||}$$
Therefore, we have 
$$
\begin{aligned}
f'_{i} (\alpha)|_{\alpha = 0} &= \sum_{j \neq i} \frac{1}{||w_j||} \cdot \frac{p'_{i, j}(\alpha) q_i (\alpha) - p_{i,j}(\alpha) q'_i(\alpha)}{q_i^2(\alpha)} |_{\alpha = 0} \\
&= \sum_{j \neq i} \frac{1}{||w_j|| ||w_i||^2} \left[(2 \cdot 1_{\langle w_i, w_j \rangle \ge 0} - 1) \cdot \delta_i \langle e_i, w_j \rangle \cdot ||w_i|| - |\langle w_i, w_j \rangle| \cdot \frac{\delta_i \langle e_i, w_i \rangle}{||w_i||} \right] \\
&= \frac{\delta_i}{||w_i||^3} \sum_{j \neq i} \frac{1}{||w_j||} \left[(2 \cdot 1_{\langle w_i, w_j \rangle \ge 0} - 1) \cdot \langle e_i, w_j \rangle \cdot||w_i||^2 - \langle e_i, w_i \rangle \cdot |\langle w_i, w_j \rangle| \right], \\
\end{aligned}
$$
which shows that $\delta_i$'s defined above guarantee $f'_i(\alpha)|_{\alpha=0} \leq 0$.

Let 
$$
g_j (\alpha; \hat{\alpha}) = \sum_{i \neq j} \frac{|\langle w_j + \alpha \delta_j e_j, w_i + \hat{\alpha} \delta_i e_i \rangle|}{||w_j + \alpha \delta_j e_j|| \cdot ||w_i + \hat{\alpha} \delta_i e_i||}. 
$$
Since $f'_i (\alpha)$ is a continuous function, we could take a sufficiently small $\hat{\alpha}$ s.t. $f'_i (\alpha')$ has the same sign when $\alpha' \in [0, \hat{\alpha}]$. Then for $\alpha' \in [0, \hat{\alpha}]$, the $\delta_i$'s defined above also guarantee $g'_j(\alpha; \alpha')|_{\alpha = 0} \leq 0$. Choosing $\alpha = \alpha'$ gives the result.  
\end{proof}

\end{document}